%% file: acl_latex.tex
\documentclass[11pt]{article}

\usepackage[final]{acl}

\usepackage{times}
\usepackage{latexsym}

\usepackage[T1]{fontenc}

\usepackage[utf8]{inputenc}

\usepackage{microtype}

\usepackage{inconsolata}

\usepackage{graphicx}

\usepackage[table]{xcolor} 
\usepackage{booktabs}      
\usepackage{subcaption}
\usepackage{graphicx}
\usepackage{multirow}      
\usepackage{graphicx}
\usepackage{multirow}
\usepackage{microtype}
\usepackage{graphicx}
\usepackage{subcaption}
\usepackage{verbatim}
\usepackage{booktabs} 
\usepackage{comment}
\usepackage{hyperref}

\usepackage{titlesec}

\usepackage{amsmath}
\usepackage{amssymb}
\usepackage{mathtools}
\usepackage{amsthm}
\usepackage{xcolor}

\theoremstyle{plain}
\newtheorem{theorem}{Theorem}[section]

\theoremstyle{definition}

\newtheorem{assumption}[theorem]{Assumption}
\theoremstyle{remark}

\newcommand{\xl}[1]{\textcolor{cyan}{\textbf{[xl: #1]}}}
\newcommand{\ours}{\textsc{EntroRouter}}
\newcommand{\sectionref}{\textsection}

\usepackage{booktabs}
\usepackage{multirow}
\usepackage[table]{xcolor}
\usepackage{makecell}
\usepackage{array}
\usepackage{xspace}
\usepackage{marvosym}

\usepackage[textsize=tiny]{todonotes}
\title{\ours: Learning Efficient Model Routing  \\ via Entropy Regulation}

\author{%
  Kaiyi Zhang$^{1,2}$\footnotemark[1],\quad Xueliang Zhao$^{3}$\footnotemark[1],\quad Zhuocheng Gong$^{4}$\\ \textbf{Wei Wu}$^{2}$\footnotemark[2],\quad \textbf{Yankai Lin}$^{1}$\footnotemark[2] \\
  $^{1}$Gaoling School of Artificial Intelligence, Renmin University of China \\
  $^{2}$Ant International, $^{3}$The University of Hong Kong, $^{4}$Ant Group \\
  \Letter~\texttt{\{kyzhang, yankailin\}@ruc.edu.cn},\quad \texttt{wuwei19850318@gmail.com}  \\
}

\begin{document}
\maketitle
\renewcommand{\thefootnote}{\fnsymbol{footnote}}
\footnotetext[1]{Equal Contribution.}
\footnotetext[2]{Corresponding author: Wei Wu and Yankai Lin.}
\input{0-abstract}

\input{1-introduction}
\input{2-method}
\input{3-experiments}
\input{4-related}

\input{5-conclusion}

\bibliography{custom}


\input{6-appendix}
\end{document}

%% file: 0-abstract.tex
\begin{abstract}
Model routing balances solution accuracy and computational cost by selecting among models of varying capabilities. While recent multi-round frameworks interleave reasoning and planning, we identify a structural failure mode termed Trust Region Collapse. We demonstrate that the deep coupling of reasoning and routing, exacerbated by the dominance of strong pre-training priors under sparse supervision, leads to degenerate local optima where capable experts are systematically suppressed. To decouple these processes, we propose \textsc{EntroRouter}, a single-round routing framework that treats entropy regulation as a core objective. We first initialize the policy via Soft Supervision, fitting a distribution of suitable models to establish a high-entropy prior for exploration. Subsequently, we stabilize Reinforcement Learning using a Soft Anchor, which utilizes offline capability estimates to orchestrate controlled entropy contraction within a safe trust region. Extensive experiments demonstrate that \ours~retains 98.3\% of the strongest expert's accuracy while reducing computational costs by 48.25\%.
\end{abstract}

%% file: 1-introduction.tex
\section{Introduction}

The rapid advancements in Large Language Models (LLMs) have introduced a critical trade-off in deployment: while larger models (e.g., Qwen3-235B) offer superior reasoning capabilities, they incur prohibitive computational costs and latency. Conversely, smaller models (e.g., Qwen3-4B) are efficient but often falter on complex tasks. This trade-off has necessitated the development of model routing systems, which aim to dynamically select the most cost-effective model for a given query without compromising solution quality~\cite{chen2023frugalgpt, vsakota2024fly}.

Current research explores two primary paradigms to solve this routing problem: iterative multi-round agents~\cite{zhang2025router} and deterministic single-round classifiers~\cite{ong2024routellm}. However, despite their architectural differences, we identify that both paradigms frequently suffer from a unified structural pathology: \textbf{Premature Entropy Collapse}. This failure to properly manage the policy's uncertainty, whether in the action space or the label space, prevents routers from exploring and learning the optimal cost-accuracy frontier.

The emerging multi-round paradigm (e.g., Router-R1) manifests as a collapse in the Action Space. Specifically, the router acts as an agent that decides when to query external experts. However, through empirical replication in the mathematical domain, we observe that the Reinforcement Learning (RL) process frequently drives the probability of calling external experts to zero (as shown in Figure~\ref{fig:collapse}). We demonstrate that this is a structural failure mode arising from the coupling of planning and execution. When a small router fails to comprehend the Out-Of-Distribution reasoning traces from larger experts, dominated by its strong pre-training priors, it effectively disregards the expert's response. Consequently, the RL optimizer perceives the expert call as a high-cost, zero-gain action, prematurely converging to a deterministic policy that refuses to route. We term this phenomenon Trust Region Collapse.

Parallel to this, traditional single-round approaches suffer from a corresponding collapse in the Label Space. This implies that in the final selection, the router is forced to prematurely commit to a rigid, deterministic choice. Existing baselines~\cite{ong2024routellm} rely on Hard Supervision (e.g., one-hot labels), forcing the router to mimic a single ``best'' model. This artificially suppresses the inherent ambiguity of problem difficulty. By forcing the target probability distribution to collapse to a sharp decision boundary too early during Supervised Fine-Tuning (SFT), the router loses the representational entropy required to capture nuanced trade-offs, rendering the policy prone to overfitting.

\begin{figure}[t]
  \begin{center}
    \centerline{\includegraphics[width=\columnwidth]{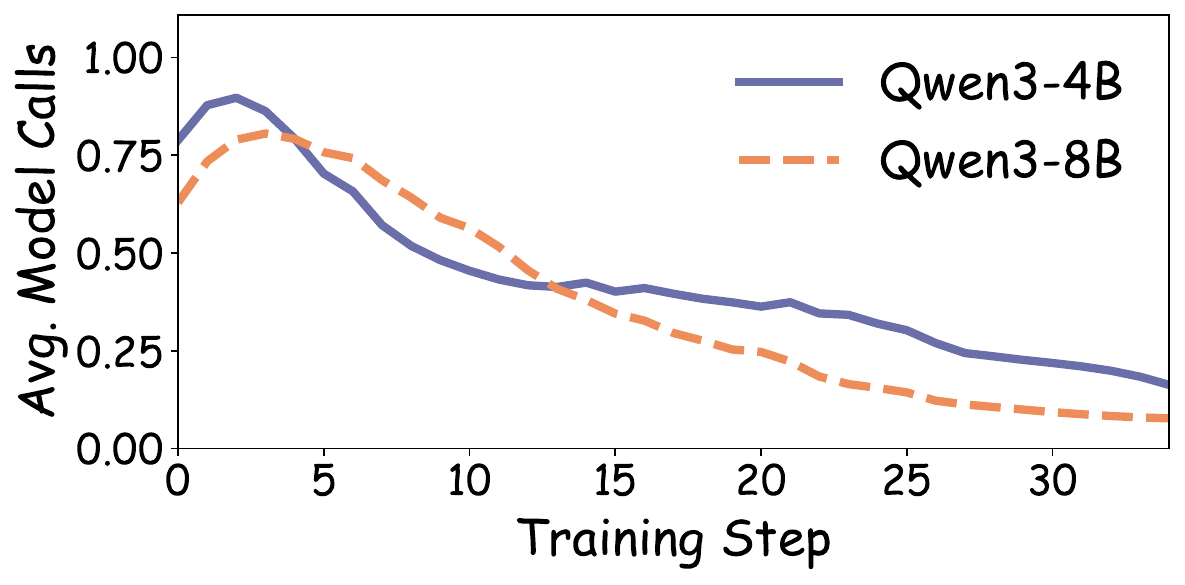}}
    \caption{Empirical observation of Trust Region Collapse. Despite successful cold-start fine-tuning, the subsequent RL process rapidly suppresses the probability of calling external models (Qwen3-4B/8B) to zero. This represents a premature entropy collapse, effectively abandoning the expert solver.}
    \label{fig:collapse}
  \end{center}
\end{figure}

To address these dual failures of entropy management, we propose \ours, a framework that treats entropy regulation as a first-class objective. We adopt the single-round routing paradigm and introduce a two-stage training pipeline designed to harmonize exploration and exploitation. First, to prevent the label-space collapse typical of hard baselines, we initialize the router via \textbf{Soft Supervision} (\sectionref~\ref{sec:stage1}). Instead of fitting a deterministic label, we train on a distribution of viable models. This establishes a high-entropy prior, preserving the policy's capacity to recognize difficulty ambiguity. Second, during the RL phase, we regularize the optimization using a Soft Anchor derived from offline capability estimates (\sectionref~\ref{sec:stage2}). Crucially, this mechanism serves a dual purpose: it defines a safe trust region to prevent the policy from drifting into degeneration, while simultaneously guiding the router to rapidly minimize entropy within this region. This allows the model to transition from the exploratory SFT state to a decisive, cost-efficient strategy that confidently selects reliable experts.

We extensively evaluate \ours~ on 7 rigorous mathematical reasoning benchmarks. Remarkably, \ours~ retains \textbf{98.3\%} of the strongest fixed-expert baseline's accuracy while reducing computational costs by \textbf{48.25\%}. Furthermore, our framework exhibits strong generalization to out-of-distribution tasks. On widely recognized benchmarks, \ours~ reduces costs by \textbf{40.5\%} while maintaining a relative accuracy within \textbf{95.4\%} of the strongest fixed-expert baseline.

Our main contributions are threefold. 
\textbf{First}, we formalize a sufficient condition under which multi-round routing suffers \textbf{Trust Region Collapse}, a failure regime where strong router priors suppress expert usage. \textbf{Second}, we propose \ours, a framework that orchestrates a \textbf{controlled entropy contraction} process. It utilizes Soft Supervision to establish a high-entropy exploration prior and employs a Soft Anchor to regularize RL optimization within a safe trust region. \textbf{Third}, we verify the effectiveness of our proposed method through extensive experiments on both mathematical benchmarks and out-of-distribution datasets.

%% file: 2-method.tex
\section{Preliminaries}
\label{sec:preliminaries}

In this section, we formalize the problem of LLM routing within the framework of a Markov Decision Process (MDP). We establish the notations used throughout the paper and review the prevailing multi-round routing paradigm, exemplified by Router-R1~\cite{zhang2025router}, which serves as the primary baseline for our analysis.

\subsection{Problem Formulation}
\label{pre:form}
We formulate the model routing task as a sequential decision-making process where a policy model (the \textit{Router}) selects an inference model (the \textit{Expert}) at each step to solve the user query. Let $x$ denote the input query, and let $\mathcal{M} = \{m_1, m_2, \dots, m_K\}$ represent the set of available candidate models that the router can choose from. We define this process as a tuple $\langle \mathcal{S}, \mathcal{A}, \mathcal{P}, \mathcal{R} \rangle$. This routing process generates a trajectory $\tau = (s_0, a_0, \dots, s_L)$ based on the policy and the selected models.

The components of the tuple are defined as follows. The \textbf{State Space ($\mathcal{S}$)} consists of states $s_t \in \mathcal{S}$ representing the accumulated context at time step $t$, initialized with the user query $s_0 = x$. The \textbf{Action Space ($\mathcal{A}$)} is defined as $\mathcal{A} = \mathcal{M} \cup \{m_{\text{self}}\}$, comprising both external candidate models and the router itself ($m_{\text{self}}$). Regarding \textbf{Transition Dynamics ($\mathcal{P}$)}, the state update depends on the selected action: if the router performs internal reasoning (i.e., $a_t = m_{\text{self}}$), it generates an output $o_t$, updating the state as $s_{t+1} = s_t \oplus o_t$, where $\oplus$ denotes sequence concatenation; for external delegation (i.e., $a_t \in \mathcal{M}$), the router formulates a query $q_t$, receives a response $o_t$ generated by the external model $a_t$, and updates the state to include the interaction as $s_{t+1} = s_t \oplus q_t \oplus o_t$.
Finally, the \textbf{Reward Function ($\mathcal{R}$)} balances solution correctness against computational cost. We define the total return of a trajectory as $R(\tau) = r_{\text{acc}} - \lambda \cdot C_{\text{total}}$, where $r_{\text{acc}} \in \{0, 1\}$ is a sparse binary indicator for solution correctness at the terminal state, and $C_{\text{total}} = \sum_{t} C(a_t) \cdot |o_t|$ represents the accumulated token-based cost, where $C(a_t)$ is the per-token cost of $a_t$.

The objective of the router is to learn a policy $\pi_\theta(a \mid s)$ that maximizes the expected return: $J(\theta) = \mathbb{E}_{\tau \sim \pi_\theta} [R(\tau)]$. In this formulation, $\lambda$ serves as a hyperparameter controlling the trade-off between accuracy and cost. We operate under the premise of a positive correlation between inference cost and model capability.  We assume that $\mathcal{M}$ consists exclusively of efficient frontier models, thereby precluding cases where a candidate is both more computationally expensive and functionally inferior to another. In the simplest scenario (single-step routing), the router selects a model based solely on the initial query $x$ without intermediate reasoning steps.

\subsection{The Multi-Round Routing Paradigm}

Recent advancements, such as Router-R1~\cite{zhang2025router}, generalize the routing task to a sequential setting. In this paradigm, the router operates as an agent interleaving internal reasoning steps ($a_t = m_{\text{self}}$) with external model calls ($a_t \in \mathcal{M}$). Unlike single-step routing where the router acts solely as a dispatcher, this framework requires the router to effectively \textit{interpret} and \textit{integrate} the external model's output $o_t$ to update the state and progress toward the solution.

Crucially, this formulation intertwines the \textit{routing decision} with the router's own \textit{reasoning execution}. Consequently, the policy's potential is bounded by the router's ability to assimilate complex information, a process often obstructed by the dominance of its own pre-training priors. As demonstrated in the following section, this dependency can lead to structural failures, where the router struggles to optimize the cost-accuracy trade-off due to stubborn internal priors and capability mismatches under sparse outcome reward.

\section{Methodology}

\begin{figure*}[t]
    \centering
    \begin{subfigure}[b]{0.36\textwidth} 
        \centering
        \includegraphics[height=5.5cm, keepaspectratio]{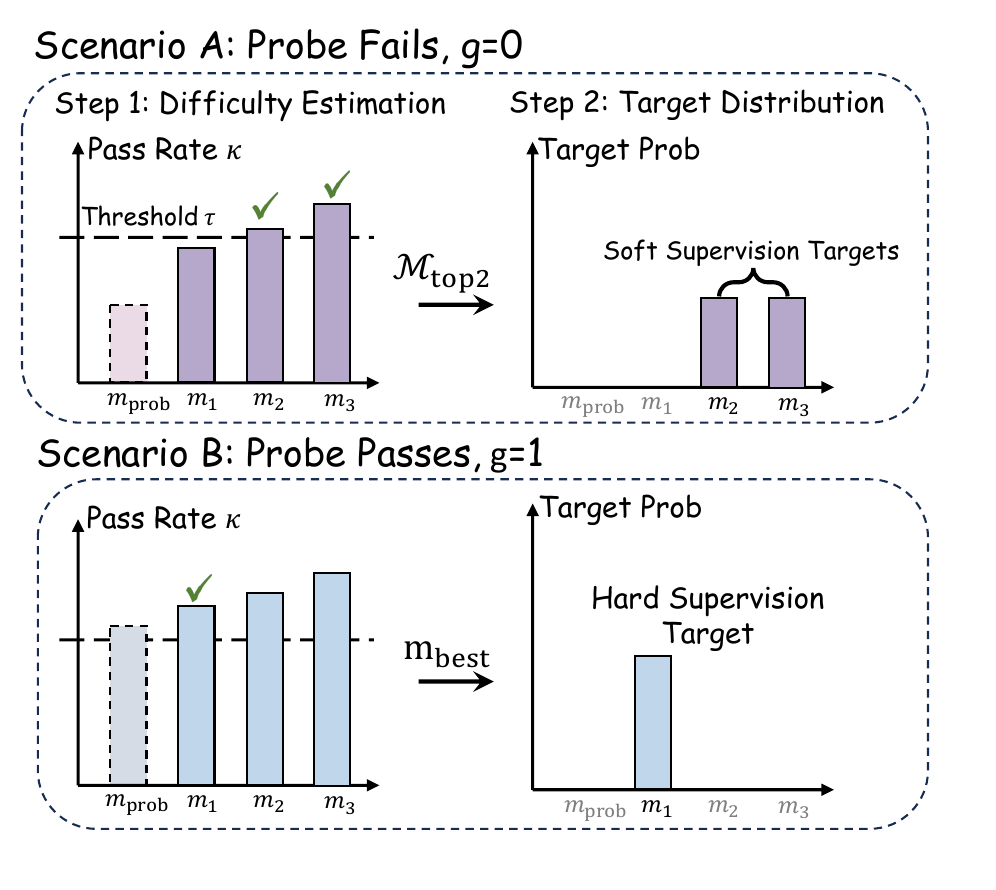}
        \caption{Stage I: SFT via Soft Supervision}
    \end{subfigure}
    \hfill
    \begin{subfigure}[b]{0.63\textwidth} 
        \centering
        \includegraphics[height=5.5cm, keepaspectratio]{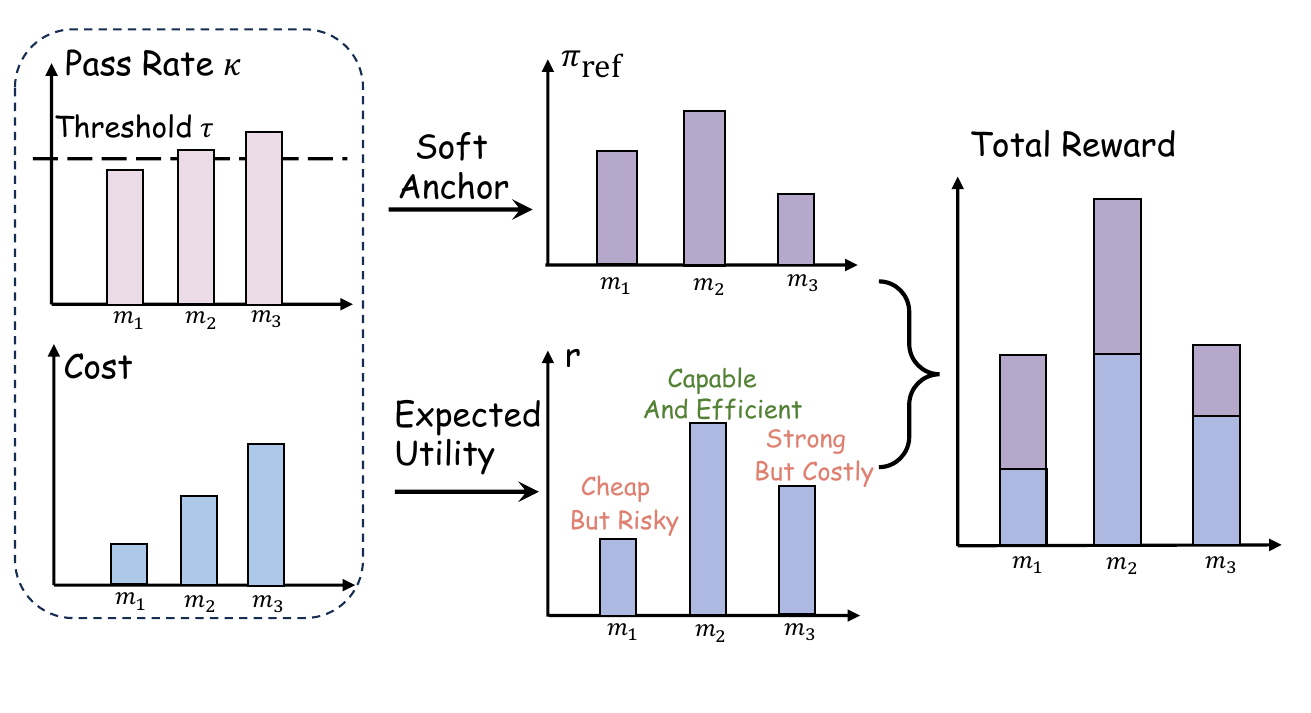}
        \caption{Stage II: Reinforcement Learning with Soft Anchor}
    \end{subfigure}
    
    \caption{\textbf{Overview of the \ours~Training Framework.} 
\textbf{(a) Stage I: SFT via Soft Supervision.} We employ a soft supervision strategy to prevent premature policy collapse. A weak probe $m_{\text{probe}}$ identifies query difficulty: for complex queries (Scenario A), we spread the training target across all qualified experts in $\mathcal{M}_{\text{topk}}$, deliberately maintaining a high-entropy policy to preserve exploration capacity. 
\textbf{(b) Stage II: Reinforcement Learning with Soft Anchor.} We refine the policy via GRPO using an expectation-grounded reward. By regularizing the update with a \textit{Soft Anchor} ($\pi_{\text{ref}}$), we constrain the router within a safe trust region while simultaneously penalizing entropy to transition from a broad exploration state to a decisive, high-confidence routing state.}
    \label{fig:SoftRouter_framework}
\end{figure*}

We begin by defining the theoretical failure mode observed in multi-round routing (\sectionref~\ref{sec:collapse_observation}). Motivated by this analysis, we propose \ours, a framework that treats routing as a single-round decision process. 
Specifically, our method is built upon a two-stage training pipeline: In the first stage, we initialize the policy by distributing probability mass across multiple viable models instead of assigning a single hard label. This \textit{entropy-preserving} initialization prevents premature convergence and maintains the capacity for further exploration (\sectionref~\ref{sec:stage1}). Subsequently, we fine-tune the policy via RL, regularizing updates against a reference distribution derived from offline capability estimates. This stage implements \textit{entropy contraction} within a safe trust region, dynamically optimizing the trade-off between solution correctness and computational cost while sharpening the final routing decision (\sectionref~\ref{sec:stage2}).

\subsection{Policy Degeneration in Multi-Round Routing}
\label{sec:collapse_observation}

We identify a critical instability that can arise in the multi-round routing paradigm under sparse outcome supervision and strong router pre-training priors.
Despite cold-start fine-tuning, the subsequent RL process frequently drives the probability of utilizing the strong model to zero, effectively abandoning the expert solver. Consequently, the policy's exploration frontier collapses into a degenerate local optimum of ``always refusing to escalate,'' a phenomenon we term \textit{Trust Region Collapse}. From an information-theoretic perspective, this manifests as a pathological and premature reduction in policy entropy: the router rapidly converges to a deterministic, low-entropy state of refusal, thereby eliminating the exploration variance required to discover better strategies.

We attribute this collapse to a functional conflation of planning and execution, which triggers a fatal causal chain characterized by three distinct phases. \textbf{First:} When the expert model generates a complex reasoning trace in response to the router's request, the smaller router often struggles to comprehend the logic because the trace is Out-Of-Distribution relative to its own training distribution. \textbf{Second:} Dominated by its own pre-training priors, the router effectively ignores the expert's provided context and falls back to generating the solution using its own limited capabilities. \textbf{Third:} Since the router has reverted to its own capabilities, the final outcome is no better than if it had acted alone. However, under sparse outcome based rewards, the RL optimizer observes a trajectory with \textit{high cost} (calling the expert) but \textit{no performance gain}. Lacking progress reward supervision to pinpoint the comprehension failure, the optimizer simply penalizes the escalation action.

Consequently, from a value-function perspective, the advantage of escalation turns negative. Since calling the expert incurs a computational cost but yields no improvement in accuracy, the optimizer perceives the expert call as strictly inferior to internal reasoning, forcing the policy to collapse.

\paragraph{Proposed Solution: Single-Round Routing.} 
The analysis above shows that the failure stems from forcing the small router to \textit{interact with and process the reasoning outputs of external expert models}. To alleviate this issue, we adopt \ours, a framework built upon the single-round routing paradigm, as the most direct and robust solution. Under \ours, the router decides which model to use based solely on the initial question. This formulation shifts the router's role: unlike the multi-round \textit{reasoner} that interacts with and comprehend the external solution, our router operates as a \textit{meta-analyzer}, estimating the query's difficulty to dispatch it to the most appropriate candidate model based solely on the initial question.

\subsection{Stage I: SFT via Soft Supervision}
\label{sec:stage1}

Our goal is to train a router policy $\pi_\theta( m \mid x)$\footnote{Following~\citet{zhao2025promptcot}, we equip the router with an intermediate reasoning step to explicitly assess query complexity before selection.
} that analyzes the query $x$ complexity via a reasoning trace before selecting the most suitable model $m$. To prevent the router from overfitting to a single deterministic choice, we introduce SFT with soft supervision. Specifically, we first identify the set of most economical and reliable experts based on offline capability estimates; then we construct a target distribution that adaptively switches between hard labels for simple queries and soft distributions for complex ones; finally, we optimize the router policy to minimize its divergence from this adaptive target. To teach the router to prioritize economy, we first determine which models are capable of solving a given query $x$. To do this, we estimate the ground-truth pass rate $\kappa(m, x)$ for each model $m$, which quantifies the likelihood that model $m$ correctly answers the query $x$. This pass rate is estimated via Monte Carlo rollouts, where $\kappa(m, x)$ is defined as the proportion of $N$ sampled responses from model $m$ that match the ground truth.

Based on this estimation, we formally define the optimal routing decisions. We operate under the general assumption that model capability correlates positively with inference cost. Under this premise, identifying the optimal expert simplifies to finding the most economical model that meets the reliability threshold $\tau$. 
Accordingly, we define the most economical expert $m_{\text{best}}$ as:

\begin{equation}
\label{eq:m_best}
    m_{\text{best}} = \operatorname*{arg\,min}_{m \in \mathcal{M}, \, \kappa(m, x) \geq \tau} C(m),
\end{equation}
where $C(m)$ denotes the per-token inference cost of model $m$. To construct a flexible training target, we further identify the set of the top-$k$ most cost-effective models, denoted as $\mathcal{M}_{\text{topk}}$. Under our assumption, this set corresponds to the $k$ cheapest models that effectively pass the threshold $\tau$:

\begin{equation}
\begin{aligned}
    \mathcal{M}_{\text{topk}} = \operatorname*{arg\,min}_{S \subseteq \mathcal{M}, \, |S| = k} & \sum_{m \in S} C(m) \\
    \text{s.t.} \quad & \forall m \in S, \, \kappa(m, x) \geq \tau.
\end{aligned}
\end{equation}

Intuitively, this selection criterion filters for the most economical experts among those sufficient for the task. Standard SFT typically forces the router to mimic a single ``best'' model, which works well for simple queries but fails to capture the inherent ambiguity in more complex queries, where model suitability is not straightforward. 
To explicitly model this ambiguity, we construct a target distribution $Q(m \mid x)$ that preserves the natural uncertainty of the task. We combine hard supervision (for trivial cases) with soft supervision (for complex cases) to prevent overfitting to arbitrary label noise:
\begin{equation}
\label{eq:unified_target}
\begin{split}
    Q(m \mid x) = & \underbrace{g(x) \cdot \mathbb{I}[m = m_{\text{best}}]}_{\text{Hard Supervision}} \\
    & + \underbrace{(1 - g(x)) \cdot \frac{\mathbb{I}[m \in \mathcal{M}_{\text{topk}}]}{k}}_{\text{Soft Supervision}}
\end{split}
\end{equation}
where $\mathbb{I}[\cdot]$ is the indicator function. 
The gating term $g(x)$ switches the supervision mode based on a lightweight probe model $m_{\text{probe}}$. For simple queries, where $g(x)=1$, if the probe demonstrates sufficient capability (i.e., $\kappa \ge \tau$), the query is deemed trivial, and the target collapses to the cheapest model in $\mathcal{M}$. For complex queries, where $g(x)=0$, the query is identified as complex, and we distribute probability mass uniformly across the valid set $\mathcal{M}_{\text{topk}}$, ensuring robustness against label noise. We optimize the router to match this target distribution. The loss function is defined as:
\begin{equation}
\label{eq:learning_objective}
    \mathcal{L}_{\text{SFT}}(\theta) = - \sum_{m \in \mathcal{M}} Q(m \mid x) \log \pi_\theta( m \mid x).
\end{equation}
By minimizing $\mathcal{L}_{\text{SFT}}$, the router learns to recognize simple cases for quick routing while maintaining a flexible decision boundary for complex queries.

\subsection{Stage II: Reinforcement Learning with Soft Anchor}
\label{sec:stage2}

While Stage I provides a reasonable initialization, the resulting policy remains static, ignoring the dynamic trade-off between correctness and computational efficiency. Stage II addresses this by learning a risk-cost preference through RL. However, unconstrained optimization risks destabilizing the policy. To enable controlled exploration, we propose a soft anchor mechanism. We implement this by incorporating a reference distribution derived from offline statistics directly into the optimization objective, effectively guiding the policy toward empirically safe and cost-effective regions.

We first construct the reference policy $\pi_{\text{ref}}$ to embody the ideal risk-cost preference. Based on the offline pass rate $\kappa(m, x)$ estimated in Stage I, we define a scalar utility metric $\mu(m, x)$ for each model $m \in \mathcal{M}$ given query $x$:
\begin{equation}
\label{eq:utility_metric}
    \mu(m, x) = \kappa(m, x) - \alpha \cdot C(m),
\end{equation}
where $C(m)$ is the inference cost and $\alpha$ governs the cost sensitivity. We then convert these utilities into a probabilistic prior using a Boltzmann distribution:
\begin{equation}
\label{eq:ref_policy}
    \pi_{\text{ref}}(m \mid x) = \frac{\exp(\mu(m, x) / T)}{\sum_{m' \in \mathcal{M}} \exp(\mu(m', x) / T)},
\end{equation}
where $T$ is a temperature hyperparameter that regulates the sharpness of the distribution. This distribution serves as the ``soft anchor'', assigning higher probability mass to models that are empirically reliable and economical. We optimize the router policy $\pi_\theta$ using Group Relative Policy Optimization (GRPO)~\cite{shao2024deepseekmathpushinglimitsmathematical}. Formally, our objective is to maximize the expected reward while regularizing the policy against the soft anchor:
\begin{equation}
\label{eq:theoretical_objective}
\begin{split}
    \max_{\pi_\theta} \, &\mathbb{E}_{m \sim \pi_\theta} [ r(m, x) ] \\
    &{} - \beta \mathbb{D}_{\text{KL}} [ \pi_\theta(\cdot|x) \,\|\, \pi_{\text{ref}}(\cdot|x) ] - \beta \mathcal{H}(\pi_\theta),
\end{split}
\end{equation}
where $\mathcal{H}(\pi_\theta)$ represents the entropy of the policy distribution. The task-performance reward $r(m, x)$ is defined as the \textit{expected utility}:
\begin{equation}
    r(m, x) = \kappa(m, x) R_{\text{succ}} + (1-\kappa(m, x)) R_{\text{fail}}.
\end{equation}
Here, $R_{\text{succ}}$ represents the cost-aware payoff received when the selected model provides a correct answer, while $R_{\text{fail}}$ is a failure penalty. The inclusion of $R_{\text{fail}}$ provides a negative gradient for high-risk candidates, actively suppressing the selection of incompetent experts. Our reward refines the raw reward $R(\tau)$ defined in Section~\ref{pre:form} in two key aspects. First, unlike $R(\tau)$ which relies on binary outcome, $r(m, x)$ utilizes the offline capability estimate $\kappa(m, x)$ as a dense, expectation-based signal to reduce training variance. Second, instead of a simple cost subtraction ($-\lambda \cdot C_{\text{total}}$), we embed the efficiency constraint into the shape of $R_{\text{succ}}$. This design enforces a cost-sensitive curriculum: for models cheaper than the optimal expert $m_{\text{best}}$ (which are presumed under-qualified under our assumption), we clamp the reward to prevent the policy from drifting toward unreliable models; conversely, for capable but expensive models, we apply a decay penalty to discourage unnecessary computational overhead (see Appendix~\ref{app:reward_details} for details).

Practically, we operationalize the objective in Eq.~\ref{eq:theoretical_objective} by incorporating the regularization term directly into the reward function, rather than modifying the underlying optimization algorithm. By augmenting the extrinsic task reward with the log-likelihood of the reference distribution, we implicitly enforce the dual regularization constraints. As formally derived in Appendix~\ref{app:prof_prior}, maximizing the return of this shaped reward is mathematically equivalent to the objective in Eq.~\ref{eq:theoretical_objective}. This formulation achieves a unique balance between safety and decisiveness through its regularization terms: 
(1) The $- \beta \mathbb{D}_{\text{KL}}$ term functions as a Soft Trust Region. Crucially, it regulates the dynamics of entropy reduction: by tethering the policy to the reference distribution $\pi_{\text{ref}}$, it prevents the negative entropy term from collapsing the policy into a degenerate, overly sharp state too abruptly. This ensures the router maintains valid exploration variance within the safe zone, preventing premature convergence to suboptimal local minima.
(2) The $- \beta \mathcal{H}(\pi_\theta)$ term encourages the router to minimize uncertainty in its final choice. In our reward-shaping implementation, this term is not applied as an additional standalone loss, it is algebraically absorbed into the reference-shaped reward. Once the reasoning process identifies a suitable model, this penalty incentivizes the router to produce a "sharp" and confident decision, preventing the policy from remaining in an indecisive probabilistic spread.

\begin{table*}[!t] 
\begin{center} 
\renewcommand{\arraystretch}{1.2} 
\tabcolsep=0.008\linewidth 
\resizebox{\linewidth}{!}{ \begin{tabular}{ l cc cc cc cc cc cc cc cc } \toprule \multirow{2}{*}{\textbf{Model}} & \multicolumn{2}{c}{\textbf{HMMT Nov 25}} & \multicolumn{2}{c}{\textbf{AIME25}} & \multicolumn{2}{c}{\textbf{OlympiadBench}} & \multicolumn{2}{c}{\textbf{AIME-24}} & \multicolumn{2}{c}{\textbf{AMC}} & \multicolumn{2}{c}{\textbf{MATH500}} & \multicolumn{2}{c}{\textbf{SMT2025}} & \multicolumn{2}{c}{\textbf{Avg.}} \\ \cmidrule(lr){2-3} \cmidrule(lr){4-5} \cmidrule(lr){6-7} \cmidrule(lr){8-9} \cmidrule(lr){10-11} \cmidrule(lr){12-13} \cmidrule(lr){14-15} \cmidrule(lr){16-17} & \textbf{Acc. ($\uparrow$)} & \textbf{Cost ($\downarrow$)} & \textbf{Acc. ($\uparrow$)} & \textbf{Cost ($\downarrow$)} & \textbf{Acc. ($\uparrow$)} & \textbf{Cost ($\downarrow$)} & \textbf{Acc. ($\uparrow$)} & \textbf{Cost ($\downarrow$)} & \textbf{Acc. ($\uparrow$)} & \textbf{Cost ($\downarrow$)} & \textbf{Acc. ($\uparrow$)} & \textbf{Cost ($\downarrow$)} & \textbf{Acc. ($\uparrow$)} & \textbf{Cost ($\downarrow$)} & \textbf{Acc. ($\uparrow$)} & \textbf{Cost ($\downarrow$)} \\ \midrule \rowcolor[gray]{0.95} \multicolumn{17}{c}{\textit{Heuristic}} \\
Random & 68.8 & 0.2783 & 73.3 & 0.2809 & 80.3 & 3.9836 & 84.6 & 0.2179 & 95.3 & 0.4213 & 98.0 & 1.2790 & 76.4 & 0.4956 & 87.1 & 0.0050 \\
Qwen3-4B & 59.0 & 0.0198 & 67.1 & 0.0192 & 77.5 & 0.2895 & 76.9 & 0.0173 & 93.8 & 0.0310 & 97.4 & 0.0907 & 65.3 & 0.0322 & 84.5 & 0.0004 \\
Qwen3-30B & 71.7 & 0.2488 & 72.1 & 0.2519 & 80.2 & 3.8433 & 87.5 & 0.2126 & 95.6 & 0.3893 & 98.2 & 1.2123 & 76.2 & 0.4983 & 87.2 & 0.0048 \\ 
Qwen3-235B & 81.5 & 0.5690 & 81.7 & 0.6179 & 83.6 & 8.7811 & 93.3 & 0.4553 & 97.6 & 0.8389 & 99.2 & 2.5881 & 84.2 & 0.9183 & 90.1 & 0.0105 \\ \midrule \rowcolor[gray]{0.95} \multicolumn{17}{c}{\textit{Existing Routing Architectures}} \\ 
RouteLLM & 59.0 & 0.0559 & 67.5 & 0.0408 & 78.2 & 1.7300 & 80.4 & 0.0433 & 95.8 & 0.0848 & 97.6 & 0.1042 & 69.8 & 0.0712 & 85.3 & 0.0015 \\ 
FORC & 70.4 & 0.2731 & 75.2 & 0.2647 & 80.9 & 4.6000 & 90.0 & 0.3435 & 96.2 & 0.5295 & 98.2 & 1.7675 & 77.6 & 0.5381 & 87.7 & 0.0059 \\ 
Router-R1 & 21.3 & 0.0108 & 28.8 & 0.0116 & 60.3 & 0.2174 & 41.7 & 0.0198 & 72.7 & 0.0241 & 93.4 & 0.0018 & 35.4 & 0.0359 & 70.0 & 0.0002 \\ 
xRouter & 28.8 & 0.0041 & 31.0 & 0.0033 & 60.9 & 0.0668 & 41.7 & 0.0018 & 72.3 & 0.0010 & 93.2 & 0.0009 & 34.4 & 0.0069 & 70.4 & 0.0001 \\ \midrule 
\textbf{\ours} & \textbf{80.0} & 0.4163 & \textbf{78.8} & 0.2929 & \textbf{81.8} & 5.2397 & \textbf{90.6} & 0.2881 & \textbf{96.6} & 0.3658 & \textbf{98.4} & 0.5238 & \textbf{80.4} & 0.5155 & \textbf{88.6} & 0.0055 \\ \bottomrule \end{tabular} } 
\end{center} 
\caption{ Comparison across math benchmarks. \textbf{Avg.} aggregates performance on math tasks. \textbf{Cost} is measured as the \textbf{total cost~($\$$)} for individual benchmarks, but converted to \textbf{average per-query cost} for the \textbf{Avg.} column. The best routing method excluding the strongest fixed expert is highlighted in \textbf{bold}.} 
\label{tab:main} 
\end{table*}

%% file: 3-experiments.tex
\section{Experiments}
\subsection{Setup}

\paragraph{Benchmarks.}
We evaluate ENTROROUTER primarily on mathematical reasoning, where instances naturally span a wide range of difficulty levels and thus provide a suitable testbed for cost-aware routing. Our evaluation suite includes \textbf{AIME 24}~\cite{aime24}, \textbf{AIME 25}~\cite{aime25}, \textbf{HMMT Nov 25}~\cite{balunovic_srimatharena_2025}, \textbf{SMT 25}, \textbf{MATH500}~\cite{hendrycks2021measuring}, \textbf{AMC}~\cite{li2024numinamath}, and \textbf{OlympiadBench}~\cite{he-etal-2024-olympiadbench}.

\paragraph{Candidate Models and Costs.}
The candidate pool $\mathcal{M}$ contains three Qwen3 reasoning models~\cite{yang2025qwen3technicalreport}: Qwen3-4B-Thinking-2507, Qwen3-30B-A3B-Thinking-2507, and Qwen3-235B-A22B-Thinking-2507. We abbreviate them as Qwen3-4B, Qwen3-30B, and Qwen3-235B. The largest model serves as the strongest-expert reference. For brevity, we also refer to it as the strongest fixed-expert baseline. Following prior routing work, we use standard API pricing as the cost function $C(m)$.\footnote{\url{https://novita.ai/pricing}} Router inference cost is excluded since its average overhead is only $1.7\times 10^{-5}$ dollars per query, which is negligible relative to expert-model costs.

\paragraph{Training Details and Baselines.}
We use the non-thinking \texttt{Qwen3-4B} as the router backbone. Following the data construction pipeline in Appendix~\ref{app:data_construction}, we first perform SFT and then optimize the router with RL. Full hyperparameters are provided in Appendix~\ref{app:train_details}. We compare against heuristic strategies, including random routing and fixed model selection, as well as representative routing methods: \textbf{RouteLLM}~\cite{ong2024routellm}, \textbf{FORC}~\cite{vsakota2024fly}, \textbf{xRouter}~\citep{qian2025xroutertrainingcostawarellms}, and \textbf{Router-R1}~\citep{zhang2025router}. Additional evaluation details are reported in Appendix~\ref{app:eval_details}.

\subsection{Main Results}
\label{sec:main_result}
The results on mathematical reasoning benchmarks are summarized in Table~\ref{tab:main}. \ours~achieves an average accuracy of \textbf{88.62\%}, outperforming all routing baselines and approaching the strongest fixed-expert baseline while substantially reducing cost. The results for Router-R1 and xRouter provide empirical evidence for the \textit{Trust Region Collapse} phenomenon analyzed in Section~\ref{sec:collapse_observation}. Both multi-round architectures \textit{succumb to} this instability, where the policy degenerates into rarely invoking external experts and instead relying exclusively on its own limited internal capabilities. Consequently, they yield poor performance, confirming that coupled reasoning-routing fails to bridge the capability gap under sparse supervision. We provide an additional intuitive explanation for this failure mode in Appendix~\ref{sec:router_fail}. By treating routing as a distinct meta-analysis task rather than an interleaved reasoning step, our framework avoids the interference of pre-training priors. Consequently, \ours~successfully identifies the cost-accuracy boundary, recovering \textbf{98.3\%} of the strongest expert’s accuracy while reducing computational costs by \textbf{48.25\%}.

\section{Discussion}

Beyond the main cost--accuracy results, we further analyze \textsc{EntroRouter} from the perspectives of component effectiveness, routing behavior, and robustness. We first ablate the key design choices in our training pipeline, including Soft Supervision, the gating term, the Soft Trust Region, and the performance reward (\sectionref~\ref{sec:ablation}). We then inspect the learned policy by visualizing routing distributions across problem-difficulty levels (\sectionref~\ref{sec:routing_analysis}). Additional studies in the appendix evaluate OOD generalization (Appendix~\ref{sec:OOD}), sensitivity to the cost--accuracy coefficient $\alpha$ (Appendix~\ref{sec:ablation_alpha}), transfer to heterogeneous candidate pools (Appendix~\ref{app:cross_family}), and robustness to noisy offline estimates (Appendix~\ref{app:noise_robustness}).

\begin{table}[t]
\centering

\renewcommand{\arraystretch}{1.15}

\resizebox{\linewidth}{!}{
\begin{tabular}{l cc cc cc cc}
\toprule
\multirow{2}{*}{\textbf{Method}} & \multicolumn{2}{c}{\textbf{HMMT-Nov}} & \multicolumn{2}{c}{\textbf{AIME25}} & \multicolumn{2}{c}{\textbf{SMT2025}} & \multicolumn{2}{c}{\textbf{Avg.}} \\
\cmidrule(lr){2-3} \cmidrule(lr){4-5} \cmidrule(lr){6-7} \cmidrule(lr){8-9}
& \textbf{Acc} & \textbf{Cost} & \textbf{Acc} & \textbf{Cost} & \textbf{Acc} & \textbf{Cost} & \textbf{Acc} & \textbf{Cost} \\
\midrule
\textbf{\ours} & \textbf{80.0} & 0.416 & \textbf{78.8} & 0.293 & \textbf{80.4} & 0.516 & \textbf{79.9} & 0.011 \\
\midrule
\ w/o Soft Supervision & 72.1 & 0.294 & 72.5 & 0.207 & 73.1 & 0.357 & 72.7 & 0.008 \\
\ w/o Gate term $g(\cdot)$ & 78.1 & 0.431 & 77.7 & 0.429 & 80.2 & 0.645 & 79.0 & 0.013 \\
\ w/o Soft Trust Region & 71.7 & 0.280 & 75.8 & 0.280 & 76.7 & 0.551 & 75.1 & 0.010 \\
\ w/o Performance Reward & 69.6 & 0.385 & 78.3 & 0.365 & 74.8 & 0.384 & 74.4 & 0.010 \\ 

\bottomrule
\end{tabular}
}
\caption{\textbf{Ablation Study.} Impact of removing Soft Supervision, Gating term $g(\cdot)$ , Soft Trust Regions, and Task-performance Reward. }
\label{tab:ablation}
\end{table}

\subsection{Ablation Study}
\label{sec:ablation}

We evaluate the contribution of each component in \ours~by comparing it against four variants: (1) replacing \textbf{Soft Supervision} with hard targets; (2) removing the \textbf{Gating term $g(\cdot)$} from the SFT target construction; (3) disabling the \textbf{Soft Trust Region} anchor; and (4) substituting our \textbf{Task-performance Reward} with sparse outcome rewards (See Appendix~\ref{app:outcome} for details). Results in Table~\ref{tab:ablation} indicate that \textbf{Soft Supervision} is paramount; its removal triggers a significant accuracy drop, confirming that hard supervision hinders exploration. The \textbf{Gating term $g(\cdot)$} is essential for cost-efficiency; by enforcing hard supervision for trivial instances during SFT, it explicitly teaches the router to assign simple queries to the cheapest expert. Without it, average costs increase as the policy fails to learn this strict economy for easy tasks. Finally, ablating the \textbf{Soft Trust Region} or \textbf{Performance Reward} degrades the trade-off, validating the necessity of anchored regularization and granular feedback for stable optimization.

\subsection{Analysis of Routing Distribution}
\label{sec:routing_analysis}

To verify whether \textsc{EntroRouter} successfully aligns model capabilities with problem complexity, we examine the routing distribution across different difficulty strata. We stratify the evaluation set into three levels based on $m_\text{best}$: \textit{Easy} ( $m_\text{best}=$ Qwen3-4B), \textit{Medium} ( $m_\text{best}=$ Qwen3-30B-A3B), and \textit{Hard} (remaining queries). As illustrated in Figure~\ref{fig:difficulty_routing}, the learned policy exhibits a clear adaptive strategy. For queries classified as \textit{Easy}, the router prioritizes computational efficiency, dispatching 76.0\% of instances to the lightweight Qwen3-4B model. Conversely, as problem complexity increases, the policy dynamically shifts probability mass towards stronger experts. In the \textit{Hard} stratum, reliance on the efficient 4B model contracts to 25.0\%, while the invocation of Qwen3-235B escalates significantly to 66.0\%. This distribution results in a weighted average API cost that rises monotonically with difficulty, confirming that \textsc{EntroRouter} has effectively internalized the latent relationship between query hardness and necessary reasoning capability. 

This adaptive distribution empirically validates the efficacy of \ours's entropy regulation mechanisms. While conventional routing systems often suffer from uncontrolled entropy decay, our framework orchestrates a precise management of exploration capacity. The Soft Supervision initializes the policy with a high-entropy prior to ensure diversity. Subsequently, in the RL stage, the reference-shaped reward encourages cost-aware decision sharpening, while the Soft Anchor constrains this contraction process. This balanced regulation ensures the router transitions to a decisive state without suffering from precipitous trust region collapse, retaining the probabilistic capacity to recognize and respond to high-complexity queries.

\begin{figure}[t]
    \centering
    \includegraphics[width=1.0\linewidth]{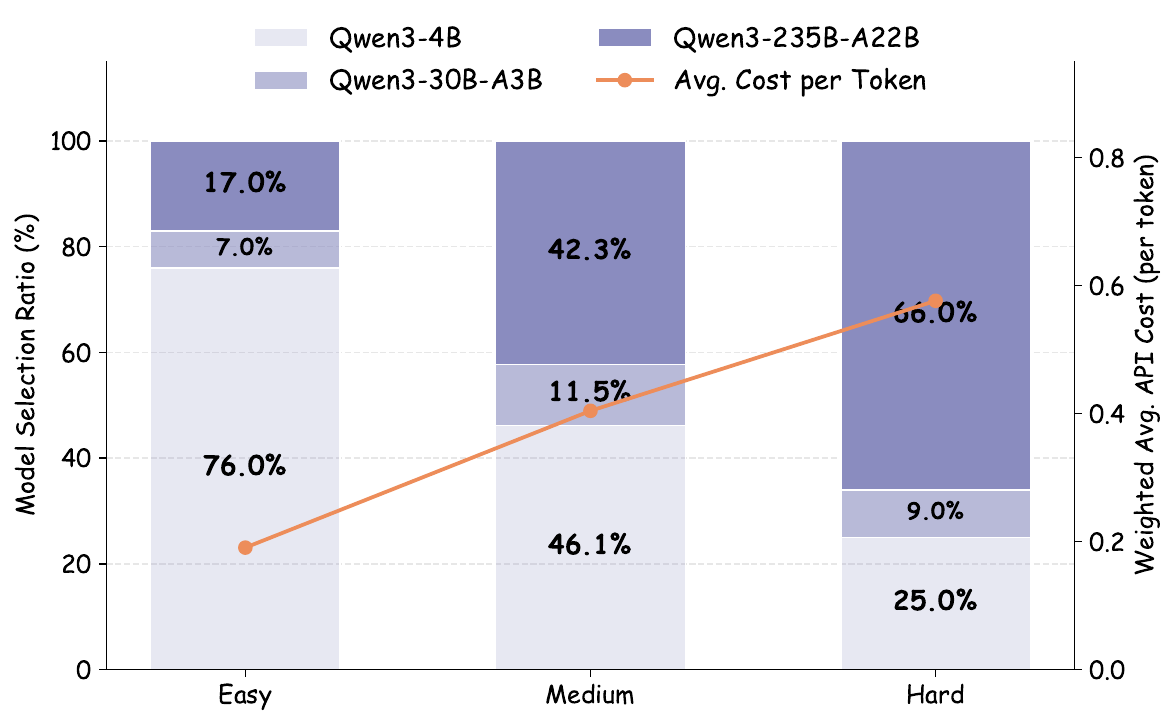} 
    \caption{\textbf{Routing Distribution \& Token Cost Efficiency across Difficulty Levels.} The stacked bars (left axis) represent the selection ratio of candidate models, while the trend line (right axis) tracks the weighted average API cost. \textsc{EntroRouter} demonstrates adaptive behavior by escalating model size in response to increasing problem difficulty.}
    \label{fig:difficulty_routing}
\end{figure}

%% file: 4-related.tex
\section{Related Work}
\paragraph{LLM Routing Systems}
The rapid proliferation of Large Language Models (LLMs) has necessitated the development of sophisticated routing mechanisms to balance performance, cost, and latency. Early approaches primarily focused on optimizing the trade-off between cost and accuracy through heuristic or predictive means. \citet{chen2023frugalgpt} sequentially invokes models from smaller to larger scales to reduce inference costs while maintaining performance. To standardize the evaluation of these growing routing strategies,  RouterBench~\cite{hu2024routerbench} was introduced as a comprehensive benchmark. Structurally, \citet{feng2024graphrouter} constructs interaction graphs between queries and models to capture complex dependencies that linear classifiers might miss.  More recently, \citet{ding2025best} introduced a test-time optimal compute framework, adaptively allocating computational resources based on the instance-level difficulty. Inspired by the success of RL in reasoning, recent research has formalized routing as a multi-round sequential decision process, where the router acts as an agent to iteratively query and integrate feedback from external experts~\cite{zhang2025router,qian2025xroutertrainingcostawarellms}.

\paragraph{Reinforcement Learning for Reasoning}

Reinforcement Learning has emerged as a critical paradigm to stimulate model reasoning without extensive supervision. Traditional RL approaches, such as Proximal Policy Optimization (PPO)~\cite{schulman2017proximal}, rely on a value function and a clipped surrogate objective to stabilize training within a trust region. More recently, Group Relative Policy Optimization (GRPO)~\cite{shao2024deepseekmathpushinglimitsmathematical} has gained prominence in mathematical reasoning. By estimating the baseline from the group mean of outputs rather than a critic, GRPO significantly improves training efficiency.

%% file: 5-conclusion.tex
\section{Conclusion}
In this work, we identified \textit{Trust Region Collapse} as a critical failure mode in the emerging multi-round routing paradigm, stemming from the capability mismatch between the router and expert models. To address this, we proposed \ours, a single-round framework that decouples planning from reasoning. By leveraging Soft Supervision and Soft Anchor regularization, our method establishes a stable optimization landscape that prevents policy degeneration. Empirical results demonstrate that \ours~ achieves Pareto efficiency, retaining \textbf{98.3\%} of the strongest expert’s accuracy while reducing computational costs by \textbf{48.2\%}. These findings suggest that small models are best utilized not merely as weaker reasoners, but as effective \textit{meta-controllers} for scalable AI systems.

\section*{Limitations}
This work focuses on establishing a robust and efficient routing framework under a single-round decision paradigm. Our analysis of Trust Region Collapse characterizes an important failure regime of multi-round routing, where expert calls are costly, supervision is sparse, and the router may not effectively use out-of-distribution expert traces. Future work could extend this analysis to richer multi-step routing agents equipped with stronger process supervision, explicit verification, or more capable aggregation modules. Similarly, while \ours~uses offline capability estimates $\kappa(m,x)$ to construct training targets and rewards, these estimates may contain sampling noise or verifier errors. We have examined this issue through noisy-estimate perturbation experiments, and a promising future direction is to develop adaptive profiling or periodic re-estimation strategies for dynamically changing model pools.

%% file: 6-appendix.tex
\newpage
\appendix

\section{Theoretical Analysis of Trust Region Collapse}
\label{app:theoretical_analysis}

In this appendix, we provide a formal derivation of the \textit{Trust Region Collapse} phenomenon. We first define the simplified decision paradigm and the decomposed advantage function. We then justify our core assumption based on the inverse scaling hypothesis and provide a rigorous proof that the effective trust region collapses to an empty set under sparse supervision.

\subsection{Problem Formulation and Advantage Decomposition}
To facilitate our derivation, we abstract the router's action space into a binary decision paradigm at step $t$:
\begin{enumerate}
    \item \textbf{Internal Reasoning ($a_{\text{self}}$):} The router generates the solution using its own parameters ($a_t = m_{\text{self}}$), incurring zero marginal cost.
    \item \textbf{External Escalation ($a_{\text{ext}}$):} The router delegates to the model pool ($a_t \in \mathcal{M}_{\text{pool}}$). We treat this as a unified, costly action with cost $C(a_{\text{ext}}) = c > 0$.
\end{enumerate}

The policy optimization is driven by the Advantage Function $A^\pi(s, a)$. Let $V^\pi(s)$ be the value function and $\bar{C}_\pi(s)$ the expected baseline cost. The advantage of the costly escalation action $a_{\text{ext}}$ decomposes into a performance gain and a cost penalty:
\begin{equation}
\label{eq:app_advantage}
\begin{split}
    A^\pi(s, a_{\text{ext}}) &= \underbrace{(\mathbb{E}[R \mid s, a_{\text{ext}}] - \mathbb{E}_{a \sim \pi}[R \mid s, a])}_{\text{Performance Gain } (\Delta R)} \\
    &\quad - \underbrace{(c - \bar{C}_\pi(s))}_{\text{Marginal Cost } (\Delta C)}
\end{split}
\end{equation}
Since $a_{\text{ext}}$ is strictly more expensive than $a_{\text{self}}$, the marginal cost term $\Delta C$ remains strictly positive unless the policy has fully converged to $a_{\text{ext}}$.

\subsection{Justification of Strong Prior Dominance}
\label{app:assumption_justification}
We posit that augmenting a weak router with a strong expert does not monotonically improve reward under sparse supervision. We formalize this as:

\begin{assumption}[Strong Prior Dominance]
\label{ass:strong_prior_app}
Let $\pi_{\text{router}}$ be initialized from a pre-trained LM and optimized via sparse outcome rewards. There exists a regime where the posterior update induced by an out-of-distribution expert trace $o\sim \pi_{\text{expert}}$ is dominated by the router's prior, such that:
\begin{align}
\mathbb{E}_{o \sim \pi_{\text{expert}}(\cdot \mid x)}[R \mid x \oplus o] \le \mathbb{E}[R \mid x].
\end{align}
\end{assumption}

\paragraph{Theoretical Grounding.}
This assumption is grounded in the ``Inverse Scaling'' taxonomy~\cite{mckenzie2023inverse}. Large language models rely on two information sources: pre-training priors and in-context prompts. When an expert provides a complex, out-of-distribution (OOD) reasoning path, it creates a conflict. 
\begin{itemize}
    \item \textbf{Strong Priors and Unwanted Imitation:} As seen in \textit{Resisting Correction} tasks, models often ignore OOD instructions (the expert trace) and revert to their own high-likelihood generation manifold. The router essentially ``talks over'' the expert.
    \item \textbf{Distractor Task Failure:} Under sparse supervision, the optimizer cannot distinguish between \textit{planning failure} (wrong routing) and \textit{execution failure} (misinterpreting the expert). The expert trace effectively acts as a ``distractor,'' causing the router to focus on superficial pattern matching rather than the intended logic, leading to $\Delta R \le 0$.
\end{itemize}

\subsection{Proof of Trust Region Collapse}
\label{app:proof_collapse}

We formally define the \textit{Effective Trust Region} $\mathcal{T}_{\text{eff}}$ as the set of policy distributions that satisfy safety constraints, positive expected advantage, and an increased probability of utilizing the expert:

\begin{equation}
\label{eq:teff_def}
\mathcal{T}_{\text{eff}} \triangleq \left\{ \pi' \in \Delta(\mathcal{A}) \;\middle|\; 
\begin{aligned}
    & D_{KL}(\pi' \,||\, \pi) \le \epsilon, \\
    & \quad (\text{Trust Region}) \\
    & \mathbb{E}_{a \sim \pi'}[A^\pi(s, a)] > 0, \\
    & \quad (\text{Policy Improvement}) \\
    & \pi'(a_{\text{ext}}) > \pi(a_{\text{ext}}) \\
    & \quad (\text{Escalation Intent})
\end{aligned}
\right\}
\end{equation}

Based on the \textit{Performance Difference Lemma}, this region represents the viable search space where increasing reliance on the strong solver is theoretically guaranteed to improve the objective.

\begin{theorem}[Trust Region Collapse]
Under Assumption~\ref{ass:strong_prior_app} and strict positive cost $c > 0$, the advantage of escalation is strictly negative ($A^\pi(s, a_{\text{ext}}) < 0$). Consequently, $\mathcal{T}_{\text{eff}} = \emptyset$.
\end{theorem}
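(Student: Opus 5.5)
The plan is to work entirely within the binary abstraction $\mathcal{A}=\{a_{\text{self}},a_{\text{ext}}\}$ and show two things. First, each of the two terms in the decomposition Eq.~\eqref{eq:app_advantage} has a definite sign. Second, the identity $\mathbb{E}_{a\sim\pi}[A^\pi(s,a)]=0$ makes the Policy Improvement and Escalation Intent constraints in Eq.~\eqref{eq:teff_def} incompatible. Write $p=\pi(a_{\text{ext}}\mid s)$ and $R_{\text{ext}}=\mathbb{E}[R\mid s,a_{\text{ext}}]$, $R_{\text{self}}=\mathbb{E}[R\mid s,a_{\text{self}}]$. I would first dispose of the degenerate case $p=1$. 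There, Escalation Intent would require $\pi'(a_{\text{ext}})>1$, which is impossible, so $\mathcal{T}_{\text{eff}}=\emptyset$ trivially. Hence assume $p<1$.

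\textbf{Step 1 (sign of the performance gain).} I will interpret Assumption~\ref{ass:strong_prior_app} with $s$ in the role of $x$. Escalating means the router continues from $s\oplus q\oplus o$ with $o\sim\pi_{\text{expert}}$, and internal reasoning yields the router's own baseline $\mathbb{E}[R\mid s]$. Under this reading the assumption says exactly $R_{\text{ext}}\le R_{\text{self}}$. The baseline $\mathbb{E}_{a\sim\pi}[R\mid s,a]=pR_{\text{ext}}+(1-p)R_{\text{self}}$ then gives
\[
\Delta R=(1-p)\,(R_{\text{ext}}-R_{\text{self}})\le 0 .
\]
\textbf{Step 2 (sign of the marginal cost).} With $C(a_{\text{self}})=0$, the baseline cost is $\bar C_\pi(s)=pc$, so $\Delta C=(1-p)c>0$ because $c>0$ and $p<1$. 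Combining the two steps gives $A^\pi(s,a_{\text{ext}})=\Delta R-\Delta C\le -(1-p)c<0$, which is the first claim.

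\textbf{Step 3 (emptiness of $\mathcal{T}_{\text{eff}}$).} Since $\mathbb{E}_{a\sim\pi}[A^\pi(s,a)]=0$, we get $A^\pi(s,a_{\text{self}})=-\tfrac{p}{1-p}A^\pi(s,a_{\text{ext}})\ge 0$. For any $\pi'$, subtracting the zero-mean identity yields
\[
\mathbb{E}_{a\sim\pi'}[A^\pi(s,a)]=\bigl(\pi'(a_{\text{ext}})-p\bigr)\bigl(A^\pi(s,a_{\text{ext}})-A^\pi(s,a_{\text{self}})\bigr).
\]
The second factor is strictly negative by Steps 1 and 2. Escalation Intent makes the first factor strictly positive, so the product is strictly negative, contradicting Policy Improvement. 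Hence no $\pi'$ satisfies both constraints, regardless of $\epsilon$ in the KL constraint, and $\mathcal{T}_{\text{eff}}=\emptyset$.

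\textbf{Main obstacle.} The delicate step is Step 1. Assumption~\ref{ass:strong_prior_app} is stated for the query $x$ and the conditional $R\mid x\oplus o$. I must argue that it transfers to the state $s$ and that $\mathbb{E}[R\mid x]$ coincides with the value of acting alone, $R_{\text{self}}$, rather than with the policy-mixed value $V^\pi(s)$. My first attempt will be to state this identification explicitly as the operational meaning of the assumption in the binary model. I will note that the argument goes through even if the assumption is read against $V^\pi(s)$: then $R_{\text{ext}}\le pR_{\text{ext}}+(1-p)R_{\text{self}}$ again forces $R_{\text{ext}}\le R_{\text{self}}$ when $p<1$.
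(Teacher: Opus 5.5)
Your proposal is correct and follows the paper's proof. The paper gets $Q^\pi(s,a_{\text{ext}})-Q^\pi(s,a_{\text{self}})\le-\lambda c<0$ directly from Assumption~\ref{ass:strong_prior_app}, and then uses the same zero-mean factorization $\mathbb{E}_{\pi'}[A^\pi]=\Delta p\,\bigl(A^\pi(s,a_{\text{ext}})-A^\pi(s,a_{\text{self}})\bigr)$ to contradict Escalation Intent; your route through $\Delta R$ and $\Delta C$ in Eq.~\eqref{eq:app_advantage} is the same computation written differently. Your boundary care tightens the argument rather than changing it: at $p=1$ the advantage of escalation is exactly $0$, so strict negativity genuinely needs $p<1$ (and emptiness is then trivial, as you note), and at $p=0$ one has $A^\pi(s,a_{\text{self}})=0$, so your $\ge 0$ is correct where the paper's unqualified $>0$ is not, though the difference $A^\pi(s,a_{\text{ext}})-A^\pi(s,a_{\text{self}})=Q^\pi(s,a_{\text{ext}})-Q^\pi(s,a_{\text{self}})<0$ is unaffected.
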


\begin{proof}
The proof proceeds by contradiction. We first establish the sign of the advantage function and then analyze the constraints on $\pi'$.

\textbf{1. Negative Advantage of Escalation.}
By Assumption~\ref{ass:strong_prior_app}, the expected reward of escalation satisfies $\mathbb{E}[R|a_{\text{ext}}] \le \mathbb{E}[R|a_{\text{self}}]$. The $Q$-values for the actions are:
\begin{align}
    Q^\pi(s, a_{\text{self}}) &= \mathbb{E}[R|a_{\text{self}}] \quad (\text{since } C=0) \\
    Q^\pi(s, a_{\text{ext}}) &= \mathbb{E}[R|a_{\text{ext}}] - \lambda c
\end{align}
Subtracting these yields $Q^\pi(s, a_{\text{ext}}) - Q^\pi(s, a_{\text{self}}) \le -\lambda c$. Since $\lambda, c > 0$, we have strict dominance: $Q^\pi(s, a_{\text{ext}}) < Q^\pi(s, a_{\text{self}})$. This implies strictly negative advantage:
\begin{align}
    \label{eq:neg_adv}
    A^\pi(s, a_{\text{ext}}) < 0 \quad \text{and} \quad A^\pi(s, a_{\text{self}}) > 0.
\end{align}

\textbf{2. Constraint Contradiction.}
For any candidate policy $\pi' \in \mathcal{T}_{\text{eff}}$, let $\Delta p = \pi'(a_{\text{ext}}) - \pi(a_{\text{ext}})$.
The \textit{Escalation Intent} condition requires $\Delta p > 0$.
The \textit{Policy Improvement} condition requires $\sum_{a} \pi'(a) A^\pi(s, a) > 0$. Using the identity $\sum \pi(a) A^\pi(s, a) = 0$, we expand this summation:
\begin{align}
    \mathbb{E}_{\pi'}[A^\pi] &= \sum_{a} (\pi'(a) - \pi(a)) A^\pi(s, a) \\
    &= \Delta p \cdot A^\pi(s, a_{\text{ext}}) \notag \\
    &\quad + (-\Delta p) \cdot A^\pi(s, a_{\text{self}}) \\
    &= \Delta p \underbrace{\left( A^\pi(s, a_{\text{ext}}) - A^\pi(s, a_{\text{self}}) \right)}_{\text{Strictly Negative } \delta}
\end{align}
From Step 1, the term $\delta$ is strictly negative. For the total expectation to be positive ($\Delta p \cdot \delta > 0$), we must have $\Delta p < 0$.

\textbf{Conclusion.}
The condition for Policy Improvement ($\Delta p < 0$) directly contradicts the condition for Escalation Intent ($\Delta p > 0$). Thus, no policy $\pi'$ exists that satisfies both conditions simultaneously. The effective trust region is empty.
\end{proof}

\paragraph{Implication.} This result shows that, in the regime characterized by Assumption~\ref{ass:strong_prior_app}, collapse is not merely an optimizer accident. It follows from the combination of sparse outcome supervision, positive escalation cost, and the router's failure to extract reward-improving information from the expert trace. This necessitates the \textit{Cognitive Decoupling} proposed in our method, effectively separating the routing decision from the trace execution.

\paragraph{Remark on Generalization.}
It is important to note that while our theoretical derivation assumes a binary action space ($a_{self}$ vs. $a_{ext}$) for clarity, the conclusion generalizes to hierarchical multi-model settings. The decision process in a tiered candidate pool (e.g., 4B $\to$ 30B $\to$ 235B) can be decomposed into a sequence of recursive binary decisions—essentially determining whether to escalate to the next tier of capability. Therefore, the structural validity of the Trust Region Collapse phenomenon remains applicable to the broader selection problem.

\section{Theoretical Analysis: Soft Anchor as a Trust Region}
\label{app:prof_prior}

In this appendix, we formally establish the mathematical equivalence between the Soft Anchor mechanism and a constrained optimization problem solved via Lagrange multipliers.

\paragraph{Entropy Regularization term $\mathcal{H}(\pi_\theta)$.}
In our optimization objective, the term $\mathcal{H}(\pi_\theta)$ denotes the Shannon entropy of the stochastic policy $\pi_\theta(\cdot|x)$ over the discrete action space of candidate models $\mathcal{M}$. Formally, for a given input query $x$, it is defined as:
\begin{equation}
    \mathcal{H}(\pi_\theta) = - \sum_{m \in \mathcal{M}} \pi_\theta(m|x) \log \pi_\theta(m|x).
\end{equation}
While entropy maximization is commonly used in Reinforcement Learning to encourage exploration, we strictly employ a negative coefficient $-\beta$ to penalize high-entropy states. This mechanism acts as an \textit{entropy contraction} constraint: it incentivizes the router to minimize decision uncertainty and converge towards a sharp, deterministic selection of the optimal expert, preventing the policy from remaining in an indecisive probabilistic spread at convergence.

\subsection{Problem Formulation via Lagrange Multipliers}

We formulate the router's objective as a dual-goal optimization problem: we seek to maximize the expected reward while simultaneously minimizing decision uncertainty (decision sharpening), subject to a trust-region constraint anchored to the reference policy $\pi_{\text{ref}}$.

Formally, this is expressed as:
\begin{align}
\label{eq:constrained_opt}
    \max_{\pi} \quad & \mathbb{E}_{m \sim \pi} [r(m, x)] - \beta \mathcal{H}(\pi) \\
    \text{s.t.} \quad & \mathbb{D}_{\text{KL}}(\pi(\cdot \mid x) \,||\, \pi_{\text{ref}}(\cdot \mid x)) \le \delta \nonumber
\end{align}
where $\mathcal{H}(\pi)$ is the entropy term we wish to penalize to ensure decisiveness, and $\delta$ defines the radius of the safe trust region.

We solve this constrained problem using the method of Lagrange multipliers. The Lagrangian $\mathcal{L}(\pi, \lambda)$ is:
\begin{equation}
\begin{split}
    \mathcal{L}(\pi, \lambda) &= \mathbb{E}_{\pi}[r(m, x)] - \beta \mathcal{H}(\pi) \\
    &\quad - \lambda \left( \mathbb{D}_{\text{KL}}(\pi \,||\, \pi_{\text{ref}}) - \delta \right).
\end{split}
\end{equation}
For simplicity, and to align the regularization strength, we assume a unified coefficient $\lambda = \beta$. The objective effectively becomes maximizing:
\begin{equation}
    \mathcal{J}(\pi) = \mathbb{E}_{\pi}[r(m, x)] - \beta \mathcal{H}(\pi) - \beta \mathbb{D}_{\text{KL}}(\pi \,||\, \pi_{\text{ref}}).
\end{equation}

\subsection{Equivalence to Reward Shaping}

To demonstrate that our implementation (Eq.~\ref{eq:theoretical_objective} in the main text) optimizes this Lagrangian, we expand the KL divergence term using the identity $\mathbb{D}_{\text{KL}}(\pi \,||\, \pi_{\text{ref}}) = -\mathcal{H}(\pi) - \mathbb{E}_{\pi}[\log \pi_{\text{ref}}]$. Substituting this into the objective:

\begin{align}
    \mathcal{J}(\pi) &= \mathbb{E}_{\pi}[r] - \beta \mathcal{H}(\pi) \nonumber \\
    &\quad - \beta \left( -\mathcal{H}(\pi) - \mathbb{E}_{\pi}[\log \pi_{\text{ref}}] \right) \nonumber \\
    &= \mathbb{E}_{\pi}[r] - \beta \mathcal{H}(\pi) + \beta \mathcal{H}(\pi) \\
    &\quad+ \beta \mathbb{E}_{\pi}[\log \pi_{\text{ref}}] \nonumber \\
    &= \mathbb{E}_{\pi} \left[ r(m, x) + \beta \log \pi_{\text{ref}}(m \mid x) \right].
\end{align}

\textbf{Conclusion:} 
This derivation proves that simply augmenting the reward with the log-likelihood of the reference policy (i.e., maximizing $\mathbb{E}[r + \beta \log \pi_{\text{ref}}]$) is mathematically equivalent to solving the constrained optimization problem in Eq.~\ref{eq:constrained_opt}. 
The implementation elegantly cancels out the intrinsic entropy bonus typically associated with KL constraints (since $-D_{KL}$ includes $+\mathcal{H}$), thereby implicitly enforcing the entropy penalty required for decision sharpening.
\section{Data Construction Details}
\label{app:data_construction}

In this section, we provide a comprehensive breakdown of our data synthesis pipeline, including the criteria for difficulty assessment, the implementation of Soft Supervision, and the generation of reasoning trajectories.

\subsection{SFT Dataset Construction}
\label{sec:sft_dataset}

\paragraph{Data Sources and Model Pool.}
We initially curated a collection of approximately 190k raw mathematical problems from high-quality open-source reasoning datasets, specifically \texttt{OpenMathReasoning}~\cite{moshkov2025aimo2} and \texttt{DeepMath}~\cite{he2026deepmathk}. 
Our candidate routing pool $\mathcal{M}$ consists of three models with escalating scale: $\{m_{\text{4B}}, m_{\text{30B}}, m_{\text{235B}}\}$, corresponding to the \textbf{Qwen3-Thinking-2507} series (4B, 30B-A3B, and 235B-A22B, respectively)~\cite{yang2025qwen3technicalreport}. 
Distinct from this pool, we utilize \textbf{Qwen3-1.7B} exclusively as a \textit{Calibration Probe} ($m_{\text{probe}}$) to identify queries at the lower bound of difficulty. We set the threshold $\tau=0.8$.

\paragraph{Expert Profiling and Filtering.}
For each query $x$, we estimate the ground-truth pass rate $\kappa(m, x)$ via $N=5$ Monte Carlo rollouts. The profiling process begins with the probe $m_{\text{probe}}$. If $\kappa(m_{\text{probe}}, x) \ge \tau$, the gating term is set to $\mathcal{G}(x)=1$, identifying the query as trivial. Under our cost-capability assumption, these samples are assigned to $m_{\text{best}}=$Qwen3-4B-Thinking-2507, the globally cheapest model in $\mathcal{M}$, to minimize inference overhead.

For non-trivial queries ($\mathcal{G}(x)=0$), we profile the remaining candidates. The \textbf{most economical expert} $m_{\text{best}}$ is determined as the cheapest model satisfying $\kappa(m, x) \ge \tau$ (Eq.~\ref{eq:m_best}). If no model in $\mathcal{M}$ meets the reliability threshold, we default to $m_{\text{best}} = m_{\text{235B}}$ to prioritize correctness.

\paragraph{Soft Target Construction}
To implement the \textbf{Soft Supervision} defined in Eq.~\ref{eq:unified_target}, 
we construct the soft target support from the most economical qualified experts. 
For non-trivial instances where at least two candidate models satisfy the 
reliability threshold, we set \(\mathcal{M}_{\text{top2}}\) to the two cheapest 
qualified models and distribute the target probability uniformly over them. 
This soft target avoids forcing the router to imitate a single arbitrary expert 
when multiple models are empirically viable.

For high-difficulty instances where the qualified support degenerates to the 
strongest expert, the model-label target becomes a singleton,
\[
\mathcal{Q}(m|x)=\mathbb{I}[m=m_{\text{235B}}],
\]
rather than an expanded set with duplicate model labels. To strengthen supervision 
for these extreme cases, we apply \textbf{Multi-path Reasoning Distillation}: 
for the same query and the same target model label \(m_{\text{235B}}\), we 
synthesize two distinct reasoning trajectories 
\((y_{\text{CoT}}^{(1)}, y_{\text{CoT}}^{(2)})\). This duplicates the training 
signal at the rationale level, not at the model-label level. By exposing the 
router to diverse analytical patterns associated with the strongest expert, 
the router learns to recognize extreme query complexity from multiple reasoning 
perspectives, improving the robustness of its query-analysis behavior for the 
most challenging instances.

Equivalently, this procedure can be viewed as upweighting extreme-difficulty 
instances in the SFT corpus while keeping the routing target distribution 
\(\mathcal{Q}(m|x)\) mathematically well-defined.

\textit{Prevention of Label Inflation.} The inclusion of $m_{\text{probe}}$ for initial filtering is a critical design choice. Since $m_{\text{probe}} \notin \mathcal{M}_{\text{pool}}$, trivial queries (solvable by a 1.7B model) would otherwise be assigned to Qwen3-4B-Thinking-2507. Under our setting, this would shift the target probability toward Qwen3-30B-A3B-Thinking-2507, leading to "compute waste" where simple logic is handled by mid-to-large tier models. Our probe-based filtering ensures the router focuses strictly on the appropriate reasoning complexity.

\paragraph{Reasoning Rationale Synthesis.}
To equip the router with the intermediate reasoning capabilities described in Section~\ref{sec:stage1}, we synthesized high-quality reasoning rationales (CoT) for the valid $(x, m_{\text{best}})$ pairs. We first utilized \textbf{Gemini-3-Pro}~\cite{gemini-3-pro} to generate a reasoning exemplar. Using this as a 1-shot seed, we prompted \textbf{GPT-OSS-120B}~\cite{Agarwal2025gptoss120bgptoss20bMC} to synthesize reasoning chains for the entire filtered dataset. This distillation process ensures that the router's analysis phase is grounded in sophisticated logical depth. The final SFT dataset comprises approximately 270k samples, encompassing query analysis traces and their corresponding soft-target routing labels.

\subsection{RL Dataset Construction}

For the Reinforcement Learning stage, we constructed a dataset designed to test generalization and robustness. We sampled a total of 50k problems from \texttt{OpenMathReasoning}:
\begin{itemize}
    \item \textbf{Seen Set (30k):} Problems re-sampled from the SFT distribution to prevent catastrophic forgetting.
    \item \textbf{Unseen Set (20k):} Novel problems disjoint from the SFT data to encourage generalization.
\end{itemize}
Similar to the SFT stage, we determined the difficulty reference by running \texttt{Qwen3-4B-Thinking-2507} and \texttt{Qwen3-30B-A3B-Thinking-2507} for 5 iterations each. Crucially, we treat the largest expert \texttt{Qwen3-235B-A22B-Thinking-2507} as the performance ceiling of our system, assigning it a constant pass rate of $\kappa(\text{Qwen3-235B-A22B-Thinking-2507}, x) = 1.0$ for all $x$. The reference ground truth was assigned to the smallest model achieving a $\ge \tau$ pass rate. This reference is used to compute the reward signal during GRPO training, rewarding the model for selecting the most efficient effective model.

We observed that the initial distribution was heavily skewed toward the smallest model (\texttt{Qwen3-4B-Thinking-2507}), which could bias the policy toward trivial cost-saving behaviors. To address this, we performed random downsampling on this class to ensure a balanced distribution. The final curated dataset consists of approximately 35k samples.

\section{Training Details}
\label{app:train_details}

We detail the hyperparameters and configurations for both the Supervised Fine-Tuning (SFT) and Reinforcement Learning (RL) stages below.

\paragraph{SFT Configuration.}
For the SFT stage, we set the learning rate to $2\times 10^{-5}$ with a warmup ratio of $0.1$. The models were trained for $5$ epochs.

\paragraph{RL Configuration.}
Our reinforcement learning framework is implemented using the VeRL~\cite{sheng2024hybridflow}, employing the standard GRPO algorithm~\cite{shao2024deepseekmathpushinglimitsmathematical}. We set the training batch size to $128$ and the maximum sequence length to $10,000$. For each query, we collected $16$ rollouts. Notably, we set the KL divergence coefficient to $0$. We train the backbone for 40 steps.

We also set the failure penalty $R_{\text{fail}}$ to $-3.0$. Additionally, we calibrate the cost–performance trade-off in Eq.~\ref{eq:utility_metric} by setting the hyperparameter $\alpha = 0.6$. 

\subsection{Reward Function Details}
\label{app:reward_details}

In Section~\ref{sec:stage2}, we introduced a piecewise success reward $R_{\text{succ}}$ to balance safety and efficiency. Here we provide the exact formulation. Let $m_{\text{best}}$ be the most economical model that meets the capability threshold $\tau$ (as defined in Eq.~\ref{eq:m_best}). We distinguish two regimes based on the selected model $m$'s cost $C(m)$:

\paragraph{Risk Zone ($C(m) < C(m_{\text{best}})$).}
In this regime, the selected model is cheaper than the empirically optimal safe choice. Although it may correctly answer the query (stochastically), it is statistically less reliable. To avoid "reward hacking"—where the model overfits to accidental successes or noise in the verification function—we clamp the success reward to a conservative constant:
\begin{equation}
    R_{\text{succ}} = 0.6.
\end{equation}

\paragraph{Efficiency Zone ($C(m) \ge C(m_{\text{best}})$).}
In this regime, the model is qualified but potentially over-expensive. To encourage the router to prefer $m_{\text{best}}$ over strictly stronger but wasteful models, we introduce a linear cost penalty:
\begin{equation}
    \resizebox{1\linewidth}{!}{$\displaystyle
        R_{\text{succ}} = \max \left( 1 - \frac{C(m) - C(m_{\text{best}})}{\delta_{\text{max}}} \cdot (1 - \gamma), \ \gamma \right),
    $}
\end{equation}
where $\delta_{\text{max}}$ represents the maximum cost spread in the model pool, and $\gamma$ (set to 0.2 in our experiments) ensures a minimum positive signal for all correct answers.

\subsection{Sparse Outcome Reward for Ablation}
\label{app:outcome}

To rigorously evaluate the contribution of our continuous, expectation-based reward shaping, we constructed a baseline using a discrete, sparse outcome reward signal. Formally, for a given query $x$, let $m_{\text{best}}$ denote the optimal expert defined in Eq.~(\ref{eq:m_best}) (i.e., the most economical model satisfying the capability threshold). We define the sparse reward function $R_{\text{sparse}}(m, x)$ as follows:

\begin{equation}
    R_{\text{sparse}}(m, x) = 
    \begin{cases} 
    1.0, & \text{if } m = m_{\text{best}}, \\
    0.5, & \text{if } C(m) > C(m_{\text{best}}), \\
    0.0, & \text{if } C(m) < C(m_{\text{best}}).
    \end{cases}
\end{equation}

This design imposes a strict step-function logic:
\begin{itemize}
    \item \textbf{Optimal Match ($1.0$):} The router receives maximum reward only if it selects the exact optimal model $m_{\text{best}}$.
    \item \textbf{Inefficient Success ($0.5$):} If the router selects a model that is more capable but more expensive than $m_{\text{best}}$, it receives partial credit (0.5). This acknowledges correctness but penalizes computational waste.
    \item \textbf{Failure/Risk ($0.0$):} If the router selects a model cheaper than $m_{\text{best}}$ (which is presumed unqualified), it receives zero reward, effectively penalizing the risk of incorrectness.
\end{itemize}

\section{Evaluation Details}
\label{app:eval_details}

\paragraph{Infrastructure and Deployment.} All experiments were conducted on a high-performance computational node equipped with $8 \times$ NVIDIA B200 GPUs. We deployed all external candidate models locally using the vLLM inference engine to ensure efficient high-throughput serving.

\paragraph{Generation Configuration.} We set specific maximum generation limits to accommodate the reasoning depth of different models:
\begin{itemize}
    \item \textbf{Candidate Models:} The maximum generation length was set to 32,768 tokens for Qwen3-4B and Qwen3-30B, and 60,000 tokens for the Qwen3-235B to prevent truncation of long chain-of-thought reasoning.
    \item \textbf{Router Model:} The router's generation limit was set to 12,000 tokens.
\end{itemize}

\paragraph{Robustness Protocol.} To mitigate variance and ensure statistical significance on benchmarks with limited sample sizes, we adopted a multi-run averaging strategy. We report the average performance over 16 independent runs for smaller datasets (HMMT, AIME 2024, and AIME 2025) and 8 independent runs for AMC and SMT 2025. We refer to Qwen3-235B as the strongest-expert reference and the strongest fixed-expert baseline.

\paragraph{Answer Verification.}
To evaluate correctness, we utilized math-verify\footnote{\url{https://github.com/huggingface/Math-Verify}} to parse the model outputs and determine if the generated answers were consistent with the ground truth.

\section{Intuitive Analysis on the Failure of Multi-round Baselines}
\label{sec:router_fail}

While Section~\ref{sec:collapse_observation} characterizes a sufficient regime for Trust Region Collapse, we here offer a complementary intuitive explanation.

Existing multi-round agents, such as Router-R1~\cite{zhang2025router}, are largely validated on Knowledge QA tasks. In such contexts, the primary function of the external expert is to supply missing factual information. For a router, assimilating a specific fact (e.g., an entity name) is structurally simple and aligns well with standard token prediction. However, the mathematical domain imposes a fundamentally higher cognitive burden: the external expert must provide a rigorous, multi-step \textbf{reasoning chain}. Compared to factual retrieval, these complex logical traces are highly Out-Of-Distribution (OOD) for a smaller router. The router often fails to ``comprehend'' this alien logic, perceiving the long expert trace as noise rather than a helpful guide, which leads it to revert to its own internal generation.

This intuition is strongly corroborated by empirical observations from the xRouter~\cite{qian2025xroutertrainingcostawarellms} study. Notably, the authors reported that they failed to successfully train the policy using the Qwen3 family, explicitly attributing this failure to the model's ``excessively strong pre-training priors,'' and were consequently forced to revert to Qwen2.5. This independent finding serves as a powerful validation of our \textbf{Strong Prior Dominance} hypothesis: as base models become stronger (possessing more robust internal priors), they become paradoxically more resistant to integrating external reasoning, thereby accelerating the Trust Region Collapse.

\section{Comparison with LLM-as-a-Router}
\label{app:comparison_llm_router}

\begin{table*}[!t]
\centering
\renewcommand{\arraystretch}{1.2}
\tabcolsep=0.008\linewidth
\resizebox{\linewidth}{!}{
\begin{tabular}{
l
cc cc cc cc cc cc cc cc cc cc 
}
\toprule
\multirow{2}{*}{\textbf{Model}} &
\multicolumn{2}{c}{\textbf{HMMT Nov 25}} &
\multicolumn{2}{c}{\textbf{AIME25}} &
\multicolumn{2}{c}{\textbf{OlympiadBench}} &
\multicolumn{2}{c}{\textbf{AIME-24}} &
\multicolumn{2}{c}{\textbf{AMC}} &
\multicolumn{2}{c}{\textbf{MATH500}} &
\multicolumn{2}{c}{\textbf{SMT2025}} &
\multicolumn{2}{c}{\textbf{Avg.}}  \\
\cmidrule(lr){2-3}
\cmidrule(lr){4-5}
\cmidrule(lr){6-7}
\cmidrule(lr){8-9}
\cmidrule(lr){10-11}
\cmidrule(lr){12-13}
\cmidrule(lr){14-15}
\cmidrule(lr){16-17}
 & \textbf{Acc. ($\uparrow$)} & \textbf{Cost ($\downarrow$)} & 
   \textbf{Acc. ($\uparrow$)} & \textbf{Cost ($\downarrow$)} & 
   \textbf{Acc. ($\uparrow$)} & \textbf{Cost ($\downarrow$)} & 
   \textbf{Acc. ($\uparrow$)} & \textbf{Cost ($\downarrow$)} & 
   \textbf{Acc. ($\uparrow$)} & \textbf{Cost ($\downarrow$)} & 
   \textbf{Acc. ($\uparrow$)} & \textbf{Cost ($\downarrow$)} & 
   \textbf{Acc. ($\uparrow$)} & \textbf{Cost ($\downarrow$)} & 
   \textbf{Acc. ($\uparrow$)} & \textbf{Cost ($\downarrow$)}  \\
\midrule
GPT-OSS-120B & 74.4 & 0.2974 & 74.6 & 0.2996 & 80.9 & 4.510 & 88.3 & 0.1877 & 95.6 & 0.2285 & 98.2 & 0.4757 & 78.3 & 0.5082 & 87.7 & 0.0047 \\
Qwen3-235B-A22B-Instruct & 70.0 & 0.2380 & 72.7 & 0.2624 & 80.0 & 3.7312 & 87.1 & 0.1760 & 95.8 & 0.2029 & 97.6 & 0.6625 & 75.7 & 0.4565 & 86.8 & 0.0041 \\
\midrule
\textbf{\ours} & \textbf{80.0} & 0.4163 & \textbf{78.8} & 0.2929 & \textbf{81.8} & 5.2397 & \textbf{90.6} & 0.2881 & \textbf{96.6} & 0.3658 & \textbf{98.4} & 0.5238 & \textbf{80.4} & 0.5155 & \textbf{88.6} & 0.0055  \\
\bottomrule
\end{tabular}
}
\caption{
Comparison between \ours~(ours) and LLM-as-a-Router across multiple math benchmarks.
}
\label{tab:appendix_comparison_llm_router}
\end{table*}

In this section, we provide a detailed comparison between our method (\textbf{Ours}) and the \textbf{LLM-as-a-Router} baseline. The \textbf{LLM-as-a-Router} approach involves using LLMs directly as routers for selecting the best model for each query. We specifically compare our method with two configurations of LLM-as-a-Router: \texttt{GPT-OSS-120B} and \texttt{Qwen3-235B-A22B-Instruct}, both of which are high-capability but high-latency alternatives.

The comparison results across these benchmarks are summarized in Table~\ref{tab:appendix_comparison_llm_router}. Our method consistently outperforms the \textbf{LLM-as-a-Router} baseline, achieving higher accuracy with comparable cost. While \textbf{LLM-as-a-Router} directly uses large LLMs to decide on the routing model for each query, our method leverages a specialized routing policy to achieve stronger accuracy under a comparable cost scale, avoiding direct reliance on large LLMs as routers. This demonstrates the effectiveness of our optimized routing approach compared to using LLMs as routers.

\begin{table}[!t]
\centering
\renewcommand{\arraystretch}{1.1}
\setlength{\tabcolsep}{2pt}
\resizebox{\linewidth}{!}{
\begin{tabular}{l cc cc cc}
\toprule
\multirow{2}{*}{\textbf{Model}} & \multicolumn{2}{c}{\textbf{GPQA-Diamond}} & \multicolumn{2}{c}{\textbf{MMLU-Pro}} & \multicolumn{2}{c}{\textbf{OOD Average}} \\
\cmidrule(lr){2-3} \cmidrule(lr){4-5} \cmidrule(lr){6-7}
& \textbf{Acc. ($\uparrow$)} & \textbf{Cost $\downarrow$}  & \textbf{Acc. ($\uparrow$)}  & \textbf{Cost $\downarrow$} & \textbf{Acc. ($\uparrow$)}  & \textbf{Cost $\downarrow$} \\
\midrule
GPT-OSS-120B & 71.7 & 0.6856 & 77.6 & 1.0022 & 76.8 & 0.0012 \\
Qwen3-235B-A22B-Instruct & 72.7 & 0.3141 & 79.4 & 0.8459 & 78.5 & 0.0008 \\
\midrule
\textbf{\ours} & \textbf{73.2} & 0.9018 & \textbf{80.6} & 2.3002 & \textbf{79.5} & 0.0023 \\
\bottomrule
\end{tabular}
}
\caption{Comparison between \ours~(ours) and LLM-as-a-Router across OOD benchmarks.}
\label{tab:appendix_ood}
\end{table}

Additionally, we evaluate both methods on OOD benchmarks, as shown in Table~\ref{tab:appendix_ood}. The results show that \ours~maintains high accuracy, confirming the practical advantage of our method in diverse real-world scenarios.

\section{Generalization to Out-of-Distribution Tasks}
\label{sec:OOD}
To evaluate the cross-domain robustness of our approach, we report performance on two prominent Out-of-Distribution (OOD) benchmarks: GPQA-Diamond~\cite{rein2024gpqa} and MMLU-Pro~\cite{wang2024mmluprorobustchallengingmultitask}. Due to computational constraints, we conduct our evaluation on a randomly sampled subset of 1,200 instances from each dataset. As illustrated in Table~\ref{tab:OOD}, \ours~ consistently demonstrates superior generalization capabilities across OOD domains. Our method maintains high competitive efficacy without domain-specific fine-tuning. Specifically, \ours~ achieves an accuracy reduction of only $3.8\%$, while simultaneously yielding a significant cost saving of $40.5\%$ relative to the strongest fixed-expert baseline.

\begin{table}[!t]
\centering
\renewcommand{\arraystretch}{1.1}
\setlength{\tabcolsep}{2pt}
\resizebox{\linewidth}{!}{
\begin{tabular}{l cc cc cc}
\toprule
\multirow{2}{*}{\textbf{Model}} & \multicolumn{2}{c}{\textbf{GPQA}} & \multicolumn{2}{c}{\textbf{MMLU-Pro}} & \multicolumn{2}{c}{\textbf{Avg.}} \\
\cmidrule(lr){2-3} \cmidrule(lr){4-5} \cmidrule(lr){6-7}
& \textbf{Acc.} & \textbf{Cost}  & \textbf{Acc.}  & \textbf{Cost} & \textbf{Acc.}  & \textbf{Cost} \\
\midrule
\rowcolor[gray]{0.95} \multicolumn{7}{c}{\textit{Heuristic}} \\
Random & 71.2 & 0.696 & 79.6 & 1.927 & 78.4 & 0.002 \\
Qwen3-4B & 62.6 & 0.052 & 75.0 & 0.168 & 73.3 & 0.000 \\
Qwen3-30B & 69.7 & 0.707 & 80.3 & 1.874 & 78.8 & 0.000 \\
Qwen3-235B  & 78.8 & 1.452 & 84.0 & 3.749 & 83.3 & 0.004 \\
\midrule
\rowcolor[gray]{0.95} \multicolumn{7}{c}{\textit{Existing Routing Architectures}} \\
RouteLLM & 65.2 & 0.271 & 77.6 & 0.947 & 75.8 & 0.001 \\
FORC & 71.7 & 0.741 & 78.5 & 1.644 & 77.5 & 0.002 \\
Router-R1 & 49.5 & 0.001 & 49.6 & 0.002 & 49.6 & 0.000 \\
xRouter & 42.4 & 0.005 & 52.1 & 0.003  & 50.7 & 0.000 \\
\midrule
\textbf{\ours} & \textbf{73.2} & 0.902 & \textbf{80.6} & 2.300 & \textbf{79.5} & 0.002 \\
\bottomrule
\end{tabular}
}
\caption{Comparison across Out-of-Distribution (OOD) benchmarks. 
The best performing model is highlighted in bold.
}
\label{tab:OOD}
\end{table}

\section{Impact of Cost Sensitivity $\alpha$}
\label{sec:ablation_alpha}

\begin{table}[t]
\centering
\small
\renewcommand{\arraystretch}{1.15}
\setlength{\tabcolsep}{3.5pt} 
\resizebox{\linewidth}{!}{
\begin{tabular}{l cc cc cc cc}
\toprule
\multirow{2}{*}{\textbf{Method}} & \multicolumn{2}{c}{\textbf{HMMT-Nov}} & \multicolumn{2}{c}{\textbf{AIME25}} & \multicolumn{2}{c}{\textbf{SMT2025}} & \multicolumn{2}{c}{\textbf{Average}} \\
\cmidrule(lr){2-3} \cmidrule(lr){4-5} \cmidrule(lr){6-7} \cmidrule(lr){8-9}
& \textbf{Acc} & \textbf{Cost} & \textbf{Acc} & \textbf{Cost} & \textbf{Acc} & \textbf{Cost} & \textbf{Acc} & \textbf{Cost} \\
\midrule
\multicolumn{9}{c}{\textit{\ours~ (Ablation on $\alpha$)}} \\
\midrule
$\alpha=0.2$ & 79.0 & 0.479 & 78.5 & 0.454 & \textbf{82.3} & 0.710 & \textbf{80.4} & 0.015 \\
$\alpha=0.6$ & \textbf{80.0} & 0.416 & \textbf{78.8} & 0.293 & 80.4 & 0.516 & 79.9 & 0.011 \\
$\alpha=1.0$ & 72.3 & 0.276 & 75.0 & 0.255 & 77.1 & 0.448 & 75.3 & 0.009 \\ 
\bottomrule
\end{tabular}
}
\caption{\textbf{Ablation Study on Cost-Performance Trade-off ($\alpha$).} We evaluate \ours~ with varying sensitivity to computational cost. Increasing $\alpha$ effectively reduces inference costs but may compromise accuracy if the penalty becomes too aggressive.}
\label{tab:alpha}
\end{table}

We further examine the impact of the hyperparameter $\alpha$ (Eq.~\ref{eq:utility_metric}), which governs the sensitivity of the Soft Anchor to computational cost. As shown in Table~\ref{tab:alpha}, $\alpha$ serves as an effective control knob for navigating the accuracy-efficiency Pareto frontier. Specifically, while a lower setting ($\alpha=0.2$) yields the highest accuracy $80.4\%$, increasing $\alpha$ to 0.6 achieves a significant reduction in computational cost with only a slight performance drop (from $80.4\% \to 79.9\%$). However, an overly aggressive penalty ($\alpha=1.0$) causes the cost constraint to outweigh correctness, leading to a sharp performance decline ($75.3\%$). 

\section{Generalization to Heterogeneous Model Families}
\label{app:cross_family}

To examine whether ENTROROUTER is tied to a single model family, we further augment the original Qwen3 candidate pool with two additional experts from different model families: \texttt{R1-Distill-Llama-8B} and \texttt{GPT-OSS-20B}. Unlike the main experiments, where the candidate models follow a relatively clean within-family scaling hierarchy, this setting introduces heterogeneous models with different pre-training recipes, architectures, and reasoning behaviors. It therefore provides a more realistic test of whether the router can generalize beyond family-specific scaling patterns.

Due to the additional cost of profiling the newly added experts and constructing model-specific routing labels, we conduct this study on a subset of the training data. We otherwise follow the same training and evaluation protocol as in the main experiments. The results are reported in Table~\ref{tab:cross_family}.

\begin{table}[t]
\centering
\resizebox{\linewidth}{!}{
\begin{tabular}{l cc cc cc}
\toprule
\multirow{2}{*}{\textbf{Method}} & \multicolumn{2}{c}{\textbf{AIME25}} & \multicolumn{2}{c}{\textbf{HMMT-Nov}} & \multicolumn{2}{c}{\textbf{SMT2025}}  \\
\cmidrule(lr){2-3} \cmidrule(lr){4-5} \cmidrule(lr){6-7} 
& \textbf{Acc} & \textbf{Cost} & \textbf{Acc} & \textbf{Cost} & \textbf{Acc} & \textbf{Cost} \\
\midrule
Random & 65.0 & 0.193 & 61.3 & 0.194 & 66.3 & 0.320 \\
R1-Distill-Llama-8B & 31.9 & 0.021 & 27.3 & 0.023 & 39.9 & 0.032 \\
GPT-OSS-20B & 42.1 & 0.033 & 33.7 & 0.030 & 51.7 & 0.055 \\
\textbf{\ours} & \textbf{73.7} & 0.171 & \textbf{70.4} & 0.213 & \textbf{75.7} & 0.345 \\
\bottomrule
\end{tabular}
}
\caption{Generalization to an augmented heterogeneous candidate pool. We report accuracy and total inference cost on three mathematical reasoning benchmarks.}
\label{tab:cross_family}
\end{table}

\ours~consistently improves accuracy over both fixed-model selection and random routing. In particular, it outperforms random routing by 8.7, 9.1, and 9.4 percentage points on AIME25, HMMT-Nov, and SMT25, respectively, while maintaining a comparable cost level. These results indicate that the learned routing policy is not restricted to a single-family scaling hierarchy, but can also adapt to candidate pools containing models with distinct training recipes and reasoning behaviors.

\begin{table}[!h]
\centering
\resizebox{\linewidth}{!}{
\begin{tabular}{l cc cc cc}
\toprule
\multirow{2}{*}{\textbf{Perturbation}} & \multicolumn{2}{c}{\textbf{AIME25}} & \multicolumn{2}{c}{\textbf{HMMT-Nov}} & \multicolumn{2}{c}{\textbf{SMT2025}}  \\
\cmidrule(lr){2-3} \cmidrule(lr){4-5} \cmidrule(lr){6-7} 
& \textbf{Acc} & \textbf{Cost} & \textbf{Acc} & \textbf{Cost} & \textbf{Acc} & \textbf{Cost} \\
\midrule
0\%  & 78.8 & 0.293 & 80.0 & 0.416  & 80.4 & 0.516 \\
10\% & 79.6 & 0.357 & 80.2 & 0.411 & 80.0 & 0.591 \\
\bottomrule
\end{tabular}
}
\caption{Robustness to noisy offline estimates. We inject noise into 10\% of the training data by perturbing both Stage-I target labels and Stage-II pass-rate estimates.}
\label{tab:noise_robustness}
\end{table}

Importantly, the heterogeneous setting weakens the simple monotonic relationship between model cost and capability assumed in a clean within-family pool. Models from different families may exhibit different specialization patterns, and their relative performance can vary substantially across benchmarks. Therefore, routing can no longer be reduced to selecting models according to scale or price alone. The gains in Table~\ref{tab:cross_family} suggest that ENTROROUTER can leverage empirical capability estimates and query-dependent routing signals to identify effective experts even when the candidate pool does not follow a strictly monotonic cost--capability ordering.

\section{Robustness to Noisy Offline Estimates}
\label{app:noise_robustness}

ENTROROUTER relies on offline capability estimates $\kappa(m,x)$ to construct the SFT targets and the RL reward. Although these estimates are obtained through Monte Carlo rollouts, this profiling is a one-time offline cost and does not affect deployment: at inference time, the router performs a single forward pass and requires no online rollouts. This preserves the scalability of the single-round routing framework.

To examine the robustness of ENTROROUTER to imperfect offline estimates, we conduct a joint-noise stress test on the training data. Specifically, we randomly perturb 10\% of the training instances in both stages: (1) for Stage I, we shift the target model label to an adjacent capability tier when available; and (2) for Stage II, we perturb the offline pass-rate estimates $\kappa(m,x)$ by $\pm 0.2$. This setting is intentionally stronger than an isolated Stage-II perturbation, since it simultaneously corrupts the supervised initialization targets and the RL-stage capability estimates.

As shown in Table~\ref{tab:noise_robustness}, ENTROROUTER maintains stable accuracy under this joint-noise perturbation. The accuracy changes are small across all three benchmarks, indicating that the learned router is not overly sensitive to moderate corruption in the offline supervision signals. We also observe a mild increase in cost on AIME25 and SMT25, suggesting that noisy estimates can make the router slightly more conservative by assigning more queries to stronger experts. Overall, these results show that ENTROROUTER remains robust under imperfect offline profiling while preserving its core accuracy--cost trade-off.